\newtheorem{theorem}{Theorem}
\newtheorem{proposition}[theorem]{Proposition}
\newtheorem{assumption}[theorem]{Assumption}
\theoremstyle{definition}
\newtheorem{remark}[theorem]{Remark}
\newcommand{\ucrl}{\textsc{UCCRl}}
\newcommand{\sS}{\set{S}}
\newcommand{\sA}{\set{A}}
\newcommand{\agg}{^{\rm agg}}
\newcommand{\rh}{\hat{r}}
\newcommand{\rt}{\tilde{r}}
\newcommand{\ph}{\hat{p}}
\newcommand{\pt}{\tilde{p}}
\newcommand{\rhot}{\tilde{\rho}}
\newcommand{\st}{\textstyle}
\newcommand{\meanr}{{r}}
\newcommand{\tpi}[1]{\tilde\pi_{#1}}
\newcommand{\M}{M}
\newcommand{\sM}{\mathcal{M}}
\newcommand{\tM}{\tilde M}
\newcommand{\tMk}{\tM_{k}}
\newcommand{\set}[1]{\mathcal{#1}}
\newcommand{\Ind}[1]{\mathds{1}_{#1}}
\newcommand{\tlambda}{\tilde{\lambda}}
\title{Online Regret Bounds for Undiscounted Continuous Reinforcement Learning}
\author{
Ronald Ortner$^*${$^\dagger$}  \\
$^*$Montanuniversitaet Leoben\\
8700 Leoben, Austria \\
\texttt{rortner@unileoben.ac.at} \\
\And
 Daniil Ryabko$^\dagger$\\
$^\dagger$INRIA Lille-Nord Europe, \'{e}quipe SequeL \\
59650 Villeneuve d'Ascq, France\\
\texttt{daniil@ryabko.net}
}
\begin{document}

\maketitle

\begin{abstract}
We derive sublinear regret bounds for undiscounted reinforcement learning
in continuous state space. The proposed algorithm combines state aggregation
with the use of upper confidence bounds for implementing 
optimism in the face of uncertainty.
Beside the existence of an optimal policy which satisfies the Poisson equation, 
the only assumptions made are H\"older continuity of rewards and transition probabilities. 
\end{abstract}

\section{Introduction}

Real world problems usually demand continuous state or action spaces,
  and one of the challenges for reinforcement learning is to deal with such continuous domains.
In many problems there is a natural metric on the state space such that close states exhibit similar behavior.
Often such similarities can be formalized as Lipschitz or more generally H\"older continuity of reward and transition functions.

The simplest continuous reinforcement learning problem is the 1-dimensional continuum-armed bandit,
  where the learner has to choose arms from a bounded interval.
Bounds on the regret with respect to an optimal policy under the assumption that the reward 
function is H\"older continuous have been given in \cite{kleinberg,auorsze}.
The proposed algorithms apply the UCB algorithm~\cite{acbf} to a discretization of the problem. 
That way, the regret suffered by the algorithm consists of the loss by aggregation (which can be 
bounded using H\"older continuity) plus the regret the algorithm incurs in the discretized setting.
More recently, algorithms that adapt the used discretization (making it finer in more promising regions) 
have been proposed and analyzed~\cite{upfal,remi-x}.

While the continuous bandit case has been investigated in detail,
  in the general case of continuous state Markov decision processes (MDPs) a lot of work 
  is confined to rather particular settings,  primarily with respect to the considered transition model.
 In the simplest case, the transition function is considered to be deterministic as in~\cite{neupfeima},
 and mistake bounds for the respective discounted setting have been derived in~\cite{beshim}.
Another common assumption is that
transition functions are linear functions of state and action 
plus some noise. For such settings sample complexity bounds have been given in~\cite{streli-cont,bruns},
while $\tilde{O}(\sqrt{T})$ bounds for the regret after $T$ steps are shown in~\cite{abbacse}.
However, there is also some research considering more general transition dynamics under the assumption that 
close states behave similarly, as will be considered here. While most of this work is purely experimental \cite{josto,uve},
there are also some contributions with theoretical guarantees. 
Thus, \cite{kakealang} considers PAC-learning for continuous reinforcement learning in metric state spaces,
when generative sampling is possible. The proposed algorithm is a generalization of the E$^3$ algorithm~\cite{kesi}
to continuous domains. A respective adaptive discretization approach is suggested in~\cite{nouli}.
The PAC-like bounds derived there however depend on the (random) behavior of the proposed algorithm.
\medskip
 
Here we suggest a learning algorithm for undiscounted reinforcement learning 
in continuous state space. The proposed algorithm is in the tradition
of algorithms like UCRL2~\cite{jaorau} in that it implements the ``optimism in the face of uncertainty''
maxim, here combined with state aggregation. 
%%%d
Thus, the algorithm does not need a generative model or access to ``resets:''
learning is done online, that is,  in a single continual session of interactions between the environment and the learning policy. %%%%d.

For our algorithm we derive regret bounds of 
$\tilde{O}(T^{(2+\alpha)/(2+2\alpha)})$ 
for MDPs with $1$-dimensional state space and H\"older-continuous 
rewards and transition probabilities with parameter $\alpha$. 
These bounds also straightforwardly generalize to dimension $d$
where the regret is bounded by $\tilde{O}(T^{(2d+\alpha)/(2d+2\alpha)})$. 
Thus, in particular, if rewards and 
transition probabilities are Lipschitz, the regret is bounded by 
$\tilde{O}(T^{(2d+1)/(2d+2))})$ in dimension $d$ and $\tilde{O}(T^{3/4})$ in dimension 1.
We also present an accompanying lower bound of $\Omega(\sqrt{T})$. 
As far as we know, these are the first regret bounds 
for a general undiscounted continuous reinforcement learning setting.

%%%
\section{Preliminaries}\label{sec:prel}
We consider the following setting.
Given is a Markov decision process (MDP) $M$ with state space $\sS=[0,1]^d$ and finite action space $\sA$.
For the sake of simplicity, in the following we assume $d=1$. However, proofs and 
results generalize straightforwardly to arbitrary dimension, cf.\ Remark \ref{rem:d} below. 
The random rewards in state $s$ under action $a$ are assumed to be bounded in $[0,1]$
with mean $r(s,a)$.
The transition probability distribution in state $s$ under action $a$ is denoted by 
$p(\cdot|s,a)$.

We will make the natural assumption that rewards and transition probabilities are 
similar in close states. More precisely, we assume that rewards and transition probabilities are \textit{H\"older continuous}.

\begin{assumption}\label{ass:rew}
There are $L,\alpha>0$ such that for any two states $s,s'$ and all actions $a$, 
 \[
    |r(s,a)-r(s',a)| \,\leq\, L |s-s'|^\alpha.
 \]
\end{assumption}

\begin{assumption}\label{ass:prob}
There are $L,\alpha>0$ such that for any two states $s,s'$ and all actions $a$, 
 \[
  \big\|p(\cdot|s,a)-p(\cdot|s',a)\big\|_1 \,\leq\, L |s-s'|^\alpha.
 \]
\end{assumption}

For the sake of simplicity we will assume that $\alpha$ and $L$ in Assumptions \ref{ass:rew} and \ref{ass:prob}
are the same.

We also assume existence of an optimal policy $\pi^*:\sS\to\sA$ which gives optimal average reward 
$\rho^*=\rho^*(M)$ on $M$ independent of the initial state. A sufficient condition for state-independent
optimal reward is geometric convergence of $\pi^*$ to an invariant probability measure. This is a natural condition
which e.g.\ holds for any communicating finite state MDP. It also ensures (cf.~Chapter 10 of \cite{hela42}) 
that the Poisson equation holds for the optimal policy. In general, under suitable technical conditions
(like geometric convergence to an invariant probability measure $\mu_\pi$) the \textit{Poisson equation}
\begin{equation}\label{eq:poisson}
	\rho_\pi + \lambda_\pi(s) = r(s,\pi(s)) + \int_\sS  p(ds'|s,\pi(s))\cdot\lambda_\pi(s')
\end{equation}
relates the rewards and transition probabilities under any 
measurable policy 
$\pi$ to its average reward~$\rho_\pi$ and the \textit{bias} function 
$\lambda_\pi:\sS\to \mathbb{R}$ of $\pi$.
Intuitively, the bias is the difference in accumulated rewards when starting in a different state.
Formally, the bias is defined by the Poisson equation \eqref{eq:poisson} and the normalizing equation 
$\int_\sS \lambda_\pi \, d\mu_\pi=0$ (cf.~e.g.~\cite{hela30}). The following result follows from the bias definition
and Assumptions~\ref{ass:rew} and~\ref{ass:prob} (together with results from Chapter~10 of~\cite{hela42}).

\begin{proposition}\label{ass:mix}
Under Assumptions \ref{ass:rew} and \ref{ass:prob}, the bias of the optimal policy is bounded.
\end{proposition}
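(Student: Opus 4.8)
The plan is to exhibit an explicit bounded solution of the Poisson equation as a sum of reward deviations along trajectories, and then to identify it with the bias by uniqueness. Writing $s_0,s_1,\dots$ for the Markov chain induced by $\pi$ and $P^t(\cdot\mid s)$ for its $t$-step transition distribution started at $s$, I define the candidate
\[
   \tilde\lambda(s)\;:=\;\sum_{t=0}^{\infty}\big(\E[r(s_t,\pi(s_t))\mid s_0=s]-\rho_\pi\big).
\]
The crucial point is that establishing boundedness of this series uses only the boundedness of the rewards, not any a priori bound on the bias, so there is no circularity. Since $\rho_\pi=\int_\sS r(s',\pi(s'))\,d\mu_\pi(s')$ and $r(\cdot,\pi(\cdot))$ takes values in $[0,1]$,
\[
   \big|\E[r(s_t,\pi(s_t))\mid s_0=s]-\rho_\pi\big|
   =\Big|\textstyle\int_\sS r(s',\pi(s'))\,\big(P^t(ds'\mid s)-\mu_\pi(ds')\big)\Big|
   \le\big\|P^t(\cdot\mid s)-\mu_\pi\big\|_1 .
\]

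Next I would invoke geometric convergence to the invariant measure, $\|P^t(\cdot\mid s)-\mu_\pi\|_1\le C\gamma^t$ for some $C>0$ and $\gamma\in(0,1)$, so that summing the geometric series gives $|\tilde\lambda(s)|\le C/(1-\gamma)$, a bound independent of $s$. It then remains to check that $\tilde\lambda$ solves the Poisson equation \eqref{eq:poisson} and satisfies the normalization $\int_\sS\tilde\lambda\,d\mu_\pi=0$; the former follows by peeling off the $t=0$ term and recognizing the remaining shifted sum, via the Markov property, as $\E[\tilde\lambda(s_1)\mid s_0=s]=\int_\sS p(ds'\mid s,\pi(s))\,\tilde\lambda(s')$, while the latter follows because stationarity of $\mu_\pi$ makes every summand vanish. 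By uniqueness of the bias, $\tilde\lambda=\lambda_\pi$, and hence $\lambda_\pi$ is bounded.

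The point where Assumptions \ref{ass:rew} and \ref{ass:prob} and the results of Chapter~10 of \cite{hela42} enter is in guaranteeing that the geometric rate above can be taken \emph{uniform} in the starting state $s$. H\"older continuity of the transition kernel forces chains started at nearby states to stay close in distribution, and combined with compactness of $\sS=[0,1]$ this upgrades pointwise geometric ergodicity to a bound whose constants $C,\gamma$ do not depend on $s$; this is exactly the kind of conclusion furnished by the ergodicity machinery of the cited reference. H\"older continuity and boundedness of the rewards further ensure that $r(\cdot,\pi(\cdot))$ is integrable against each $P^t(\cdot\mid s)$ and against $\mu_\pi$, so the manipulations above are legitimate.

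I expect the main obstacle to be the uniform geometric convergence rather than the summation, which is routine once the decay estimate is in place. Verifying that the constants $C,\gamma$ do not degrade as $s$ ranges over $\sS$ is the delicate step, and it is here that one must appeal to the technical ergodicity conditions of \cite{hela42} rather than to an elementary argument. A secondary subtlety is that the optimal policy $\pi^*$ need not be continuous, so one cannot simply argue that $\lambda_{\pi^*}$ is continuous on a compact set; the trajectory argument above sidesteps this entirely.
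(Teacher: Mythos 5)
Your argument is correct and is essentially the one the paper intends: the paper gives no explicit proof of Proposition~\ref{ass:mix}, deferring to the bias definition and Chapter~10 of \cite{hela42}, and the series representation $\lambda_\pi(s)=\sum_{t\ge 0}\big(\E[r(s_t,\pi(s_t))\mid s_0=s]-\rho_\pi\big)$, bounded termwise by the total-variation distance to the invariant measure and summed via uniform geometric ergodicity, is exactly the standard content of that reference. The one point to keep straight is that the geometric convergence you invoke is not a consequence of Assumptions~\ref{ass:rew} and~\ref{ass:prob} but is the paper's separate standing assumption introduced just before the Poisson equation \eqref{eq:poisson}, so your concluding discussion of upgrading pointwise to uniform ergodicity via H\"older continuity and compactness, while plausible, is doing work the paper simply assumes away.
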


Consequently, it makes sense to define the \textit{bias span} $H(M)$ of a continuous state MDP~$M$ satisfying 
Assumptions~\ref{ass:rew} and~\ref{ass:prob} to be 
$H(M):=\sup_s \lambda_{\pi^*}(s) - \inf_s \lambda_{\pi^*}(s)$.
Note that since  $\inf_s \lambda_{\pi^*}(s)\leq 0$ by definition of the bias, the bias function~$\lambda_{\pi^*}$
is upper bounded by~$H(M)$.

We are interested in algorithms which can compete with the optimal policy $\pi^*$
and measure their performance by the \textit{regret} (after $T$ steps) defined as $T\rho^*(M) - \sum_{t=1}^T r_t$,
where $r_t$ is the random reward obtained by the algorithm at step $t$.
Indeed, within $T$ steps no \textit{canonical} or even \textit{bias optimal} optimal policy (cf. Chapter~10 of~\cite{hela42})
can obtain higher accumulated reward than $T\rho^* + H(M)$.

%%%
\section{Algorithm}

Our algorithm \ucrl, shown in detail in Figure \ref{alg:col}, implements the 
``optimism in the face of uncertainty maxim'' just like UCRL2~\cite{jaorau} or REGAL~\cite{regal}.
%%%
\begin{algorithm}[tb]
   \caption{The \ucrl\ algorithm}
   \label{alg:col}
\begin{algorithmic}
    \STATE   {\bfseries Input:} State space $\sS=[0,1]$, action space $\sA$, confidence parameter $\delta>0$, 
    aggregation parameter~$n\in\mathbb{N}$, upper bound $H$ on the bias span, Lipschitz parameters $L,\alpha$. \smallskip
    \STATE   {\bfseries Initialization:}  \\
    \quad $\rhd$\quad Let $I_1:=\big[0, \frac{1}{n}\big]$, $I_j:=\big(\frac{j-1}{n}, \frac{j}{n}\big]$ for $j=2,3,\ldots,n$.\\
    \quad $\rhd$\quad Set $t:=1$, and observe the initial state $s_1$ and interval $I(s_1)$.\smallskip
    \FOR {episodes $k=1,2,\ldots$} \smallskip
        \STATE $\rhd$\quad Let $N_{k}\left (I_j,a \right)$ be the number of times action~$a$ has been chosen in a state $\in I_j$ 
               \textit{prior} to episode~$k$, and $v_k(I_j,a)$ the respective counts \textit{in} episode $k$. \smallskip
        \STATE \textbf{Initialize episode }$k$:\\ 
             $\rhd$\quad Set the start time of episode $k$, $t_k:=t$.\\
                $\rhd$\quad Compute estimates $\rh_{k}({s,a})$ and $\ph\agg_{k}({I_i}|{s},{a})$ for rewards and transition probabilities, 
                using all samples from states in the same interval $I(s)$, respectively.\smallskip
        \STATE \textbf{Compute policy} $\tpi{k}$:\\ 
		$\rhd$\quad Let $\mathcal M_k$ be the set of plausible MDPs $\tM$ 
		with $H(\tM)\leq H$ and rewards $\rt(s, a)$ and 
	        transition probabilities $\pt(\cdot|s,a)$ satisfying
 	  \begin{eqnarray}\textstyle
          \label{eq:civR}
             \big\vert
              \rt(s, a) - \rh_k(s,a)
            \big \vert \;\;
          & \leq & \;
           L n^{-\alpha} + \sqrt{ \tfrac{7\log\left( 2 n A t_k / \delta  \right )}
              {2\max\{1,N_k(I(s),a)\}}
            }\,,
          \\
          \label{eq:civ}
           \Big\Vert
             \pt\agg(\cdot|s,a) - \ph_k\agg(\cdot|s,a)
            \Big \Vert_1
          & \leq &\;
             L n^{-\alpha} + \sqrt{ \tfrac{56n\log\left( 2 A t_k / \delta  \right )}
              {\max\{1,N_k(I(s),a)\}}
            }
          \;.
          \end{eqnarray}
	$\rhd$\quad Choose policy $\tpi{k}$ and $\tM_k\in \mathcal M_k$ such that
	\begin{equation}\label{eq:compute}
	 \rho_{\tpi{k}}(\tM_k) = \arg\max\{ \rho^*(M) \,|\, M\in \mathcal M_k \}.
	\end{equation}

%%%%%%%%%
        \STATE \textbf{Execute policy $\tpi{k}$}:
            \STATE \textbf{ while} $v_k(I(s_t),\tpi{k}(s_t)) < \max\{1,N_{k}(I(s_{t}),\tpi{k}(s_{t}))\}$ \textbf{do} \smallskip
            \begin{itemize}
                 \item[$\rhd$] Choose action $a_{t} = \tpi{k}(s_{t})$, 
                 obtain reward $r_t$, and observe next state~$s_{t+1}$.	\smallskip
                 \item[$\rhd$]  Set $t:=t+1$.\smallskip
            \end{itemize}
             \STATE \textbf{ end while}\smallskip
        \ENDFOR
    \end{algorithmic}
\end{algorithm}
It maintains a set of plausible MDPs~$\mathcal M$ and chooses optimistically an MDP $\tM\in \mathcal M$
and a policy $\tilde{\pi}$ such that the average reward $\rho_{\tilde{\pi}}(\tM)$ is maximized, cf.~\eqref{eq:compute}. 
Whereas for UCRL2 and REGAL the set of plausible MDPs is defined by confidence intervals for rewards
and transition probabilities for each individual state-action pair, for \ucrl\ we assume an MDP to be 
plausible if its \textit{aggregated} rewards and transition probabilities are within a certain range.
This range is defined by the aggregation error (determined by the assumed H\"older continuity) and 
respective confidence intervals, cf.~\eqref{eq:civR}, \eqref{eq:civ}. Correspondingly, the estimates
for rewards and transition probabilities for some state action-pair $(s,a)$ are calculated from all 
sampled values of action $a$ in states close to $s$.

More precisely, for the aggregation \ucrl\ partitions 
the state space into intervals $I_1:=\big[0, \frac{1}{n}\big]$, 
$I_k:=\big(\frac{k-1}{n}, \frac{k}{n}\big]$ for $k=2,3,\ldots,n$.
The corresponding aggregated transition probabilities are defined by
\begin{equation}
 p\agg(I_j|s,a) := \int_{I_j} p(ds'|s,a).
\end{equation}
Generally, for a (transition) probability distribution $p(\cdot)$ over $\sS$
we write $p\agg(\cdot)$ for the aggregated probability distribution with respect to $\{I_1,I_2\ldots,I_{n}\}$.
Now, given the aggregated state space $\{I_1,I_2\ldots,I_{n}\}$, estimates $\rh(s,a)$ and $\ph\agg(\cdot|s,a)$ are 
calculated from all samples of action $a$ in states in $I(s)$, the interval $I_j$ containing $s$.
(Consequently, the estimates are the same for states in the same interval.)

As UCRL2 and REGAL, \ucrl\ proceeds in episodes in which the chosen policy remains fixed.
Episodes are terminated when the number of times an action has been sampled from some interval~$I_j$
has been doubled. Only then estimates are updated and a new policy is calculated.

Since all states in the same interval $I_j$ have the same confidence intervals, finding the optimal pair~$\tM_k,\tpi{k}$ 
in~\eqref{eq:compute} is equivalent to finding the respective optimistic discretized MDP 
$\tM_k\agg$ and an optimal policy $\tpi{k}\agg$ on $\tM_k\agg$. 
Then $\tpi{k}$ can be set to be the extension of $\tpi{k}\agg$ to $\mathcal S$, that is, $\tpi{k}(s):= \tpi{k}\agg(I(s))$ for all $s$.
However, due to the constraint on the bias even in this finite case efficient 
computation of $\tM_k\agg$ and $\tpi{k}\agg$ is still an open problem.
We note that the \mbox{REGAL.C} algorithm~\cite{regal} selects optimistic MDP and optimal policy in the same way as \ucrl.

While the algorithm presented here is the first modification of UCRL2 to continuous reinforcement learning problems,
there are similar adaptations to online aggregation~\cite{or-adagg} and learning in finite state MDPs 
with some additional similarity structure known to the learner \cite{or-restless}.

%%%%
\section{Regret Bounds}
%%%%
For \ucrl\ we can derive the following bounds on the regret.

\begin{theorem}\label{thm}
 Let $M$ be an MDP with continuous state space $[0,1]$, $A$ actions, rewards and transition probabilities 
 satisfying Assumptions~\ref{ass:rew} and~\ref{ass:prob}, and bias span upper bounded by $H$. 
Then with probability $1-\delta$, the regret of \ucrl\ (run with input parameters $n$ and $H$) after $T$ steps is upper bounded by
\begin{equation}\label{eq:thm}
   const \cdot n H \sqrt{A T \log\big(\tfrac{T}{\delta}\big)} +  const'\cdot H L n^{-\alpha} T.
\end{equation}
Therefore, setting $n=T^{1/(2+2\alpha)}$ gives regret upper bounded by 
\[
 const\cdot H L \sqrt{A\log\big(\tfrac{T}{\delta}\big)} \cdot T^{(2+\alpha)/(2+2\alpha)}.
\]
With no known upper bound on the bias span, guessing $H$ by $\log T$
one still obtains an upper bound on the regret of $\tilde{O}(T^{(2+\alpha)/(2+2\alpha)})$.
\end{theorem}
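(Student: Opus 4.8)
The plan is to adapt the episodic regret analysis of UCRL2/REGAL to the aggregated continuous setting. First I would write the total regret as $\sum_{k=1}^{m}\Delta_k$ with $\Delta_k=\sum_{t=t_k}^{t_{k+1}-1}(\rho^*-r_t)$ over the $m$ episodes, and establish the enabling \emph{good event}: with probability at least $1-\delta$ the true MDP $M$ is plausible, i.e.\ $M\in\mathcal M_k$ for all $k$ simultaneously. Here the $Ln^{-\alpha}$ terms in \eqref{eq:civR} and \eqref{eq:civ} absorb the bias introduced by pooling samples across an interval of width $\tfrac1n$, via Assumptions~\ref{ass:rew} and~\ref{ass:prob}, while the $\sqrt{\cdot/N_k}$ terms come from a Hoeffding-type bound on the pooled reward estimate $\rh_k$ and a Weissman-type $L_1$ concentration of $\ph_k\agg$ over the $n$ bins (the source of the extra $\sqrt n$), with a union bound over times, intervals, and actions controlling the failure probability. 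Since $H(M)\le H$, the bias-span constraint does not exclude $M$, so on the good event optimism gives $\rho_{\tpi{k}}(\tMk)\ge\rho^*$ and hence $\Delta_k\le\sum_t\big(\rho_{\tpi{k}}(\tMk)-r_t\big)$.

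Next I would expand $\rho_{\tpi{k}}(\tMk)$ using the Poisson equation \eqref{eq:poisson} for $\tpi{k}$ on $\tMk$, replacing it at each step by $\rt_k(s_t,a_t)-\lambda_{\tpi{k}}(s_t)+\int\pt_k(ds'|s_t,a_t)\,\lambda_{\tpi{k}}(s')$; because $\tMk$ is discretized, $\lambda_{\tpi{k}}$ is constant on each interval and the integral reduces to a finite sum over $\{I_1,\dots,I_n\}$. Summed over one step, the difference $\rho_{\tpi{k}}(\tMk)-r_t$ then decomposes into (i) a reward error $\rt_k-r$ plus a reward-noise martingale $r-r_t$; (ii) a transition-estimation error $\sum_j(\pt_k\agg-p\agg)(I_j|s_t,a_t)\,\lambda_{\tpi{k}}(I_j)$, bounded by the $L_1$ width of \eqref{eq:civ} times $\tfrac{H}{2}$ after re-centering $\lambda_{\tpi{k}}$ (which leaves this difference unchanged and, by the constraint $H(\tMk)\le H$, confines its values to an interval of length $H$); (iii) a martingale term $\int p(ds'|s_t,a_t)\lambda_{\tpi{k}}(s')-\lambda_{\tpi{k}}(s_{t+1})$ with increments at most $H$; and (iv) a telescoping term $\lambda_{\tpi{k}}(s_{t+1})-\lambda_{\tpi{k}}(s_t)$ collapsing to at most $H$ per episode.

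Then I would sum over steps and episodes. The pooling terms accumulate to $\mathrm{const}'\cdot HLn^{-\alpha}T$ (the $H$ coming from the transition part). The stochastic confidence widths are summed using the doubling stopping rule together with Cauchy--Schwarz over the $nA$ aggregated state--action pairs, $\sum_{I,a}\sqrt{N(I,a)}\le\sqrt{nA\,T}$; the transition width, carrying both the extra $\sqrt n$ and the factor $H$, dominates and yields $\mathrm{const}\cdot nH\sqrt{AT\log(T/\delta)}$, whereas the two Azuma bounds ($H\sqrt{T\log(1/\delta)}$ and $\sqrt{T\log(1/\delta)}$), the telescoping contribution, and the $O(nA\log T)$ bound on the number of episodes are all of lower order and get absorbed. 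This gives \eqref{eq:thm}. Balancing the two terms by $n=T^{1/(2+2\alpha)}$ makes both $n\sqrt T$ and $n^{-\alpha}T$ equal to $T^{(2+\alpha)/(2+2\alpha)}$, yielding the stated rate; and when no span bound is available, guessing $H=\log T$ is a valid upper bound once $\log T\ge H(M)$ (for smaller $T$ the regret is the constant $O(e^{H(M)})$), so since $H$ enters only multiplicatively the $\log T$ is swallowed by $\tilde O$, leaving $\tilde O(T^{(2+\alpha)/(2+2\alpha)})$.

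I expect the main obstacle to be the concentration step of the first paragraph: unlike in finite-state UCRL2, the pooled estimates mix a deterministic H\"older pooling bias with empirical averages over \emph{dependent} samples (the visited states depend on the past policy), and the transition estimate additionally needs a uniform $L_1$ deviation bound over $n$ bins. Getting the confidence radii in \eqref{eq:civR}--\eqref{eq:civ} exactly right --- in particular the $\sqrt n$ scaling and the logarithmic factors surviving the union bound --- is the delicate part; the remaining Poisson/telescoping algebra is then a fairly direct transcription of the UCRL2/REGAL argument to bias functions that are piecewise constant on the intervals.
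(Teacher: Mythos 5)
Your proposal follows essentially the same route as the paper's proof: episode decomposition, validity of the confidence sets (Hölder aggregation bias plus a Weissman/Azuma-type $L_1$ concentration with the union bound over the $n$ bins supplying the $\sqrt{n}$), optimism, the Poisson-equation split into estimation and martingale terms bounded via the bias span $H$, and the standard $\sum_k v_k/\sqrt{N_k} = O(\sqrt{nAT})$ summation. The concentration step you flag as the main obstacle is handled in the paper exactly as you anticipate, by a martingale-difference (Azuma--Hoeffding) argument over sign vectors $x\in\{-1,1\}^n$, so there is no gap.
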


Intuitively, the second term in the regret bound of \eqref{eq:thm} is the discretization error, 
while the first term corresponds to the regret on the discretized MDP. A detailed proof of 
Theorem~\ref{thm} can be found in Section~\ref{sec:proof} below.

\begin{remark}[\textbf{$d$-dimensional case}] \label{rem:d}
 The general $d$-dimensional case can be handled as described for dimension 1,
 with the only difference being that the discretization now has $n^d$ states, so that
 one has $n^d$ instead of $n$ in the first term of~\eqref{eq:thm}. Then choosing
 $n=T^{1/(2d+2\alpha)}$ bounds the regret by $\tilde{O}(T^{(2d+\alpha)/(2d+2\alpha)})$.
\end{remark}

\begin{remark}[\textbf{unknown horizon}] 
 If the horizon $T$ is unknown then the doubling trick (executing the algorithm in rounds $i=1,2,\ldots$ guessing $T=2^i$
 and setting the confidence parameter to $\delta/2^i$) gives the same bounds.
\end{remark}

\begin{remark}[\textbf{unknown  H\"older  parameters}]\label{r:uh} The \ucrl\ algorithm receives (bounds on) the  H\"older  parameters $L$ as $\alpha$ as inputs.
If these parameters are not known, then one can still obtain sublinear regret bounds albeit with worse dependence on $T$. 
Specifically, 
we can use the model-selection technique introduced in  \cite{maimury}. To do this, fix a certain  number $J$ of values for the constants $L$ and~$\alpha$;
each of these values will be considered as a model.
The model selection consists in running \ucrl\ with each of these parameter values for a certain period of $\tau_0$ time steps (\textit{exploration}).
Then one selects the model  with the highest reward and uses it for a period of $\tau'_0$ time steps (\textit{exploitation}), while checking that its average reward
stays  within~(\ref{eq:thm}) of what was obtained in the exploitation phase. If the average reward does not pass this test, then the model with the second-best average reward is selected, and so on. 
Then one switches to exploration with longer periods $\tau_1$, etc.
 Since there are no guarantees on the behavior of \ucrl\ when  the  H\"older  parameters are wrong,  none of the models can be discarded at any stage.
Optimizing over the parameters $\tau_i$ and $\tau'_i$ as done in~\cite{maimury}, and increasing the number $J$ of considered parameter values, 
one can obtain regret bounds of $\tilde{O}(T^{(2+2\alpha)/(2+3\alpha)})$, or $\tilde{O}(T^{4/5})$ in the Lipschitz case. For details see \cite{maimury}.
 Since in this model-selection process \mbox{\ucrl} is used in  a ``black-box'' fashion,
the exploration is rather wasteful, and thus we think that this bound is suboptimal. 
 Recently, the results of~\cite{maimury} have been improved~\cite{icml13}, and it seems that similar analysis 
gives improved regret bounds for the case of unknown H\"older parameters as well.
\end{remark}

The following is a complementing lower bound on the regret for continuous state reinforcement learning.
 
\begin{theorem}\label{thm:lower}
For any $A, H>1$ and any reinforcement learning algorithm
there is a continuous state reinforcement learning problem 
with $A$ actions and bias span $H$
satisfying Assumption~\ref{ass:rew} 
such that the algorithm suffers regret of $\,\Omega(\sqrt{H A T})$.
\end{theorem}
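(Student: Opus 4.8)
The plan is to establish the lower bound by reducing to a hard family of \emph{finite}-state MDPs and embedding it into the continuous state space, exploiting the fact that only Assumption~\ref{ass:rew} (H\"older continuity of rewards, not transitions) must be respected. Concretely, I would adapt the two-state construction underlying the $\Omega(\sqrt{DAT})$ finite-state lower bound of UCRL2 (Jaksch--Ortner--Auer), with the bias span $H$ playing the role of the diameter $D$. Because transition kernels need not be continuous, I can let all transitions be point masses supported on two fixed points $s_-=\tfrac14$ and $s_+=\tfrac34$ of $[0,1]$, so that the reachable set is effectively $\{s_-,s_+\}$ and the process behaves like a two-state chain. The reward function on $\sS=[0,1]$ is then taken to be any Lipschitz interpolation with $r(s_-)=0$ and $r(s_+)=1$ (e.g.\ piecewise linear), which satisfies Assumption~\ref{ass:rew} with $\alpha=1$ and $L=O(1)$; since the two relevant points are only ever visited, the rewards actually obtained are exactly $0$ and $1$, and the continuity ramp never matters. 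This reflects the fact that the target bound is independent of $\alpha$: it is a base reinforcement-learning difficulty that the continuity structure cannot circumvent.

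Next I would set up the family of instances. Fix a base transition probability $q\asymp 1/H$ and a small gap $\eta$. In $s_+$ all $A$ actions give reward $1$ and return to $s_-$ with probability $q$; in $s_-$ all actions give reward $0$ and move to $s_+$ with probability $q$, \emph{except} one distinguished ``good'' action $a^\ast\in\sA$, which moves to $s_+$ with probability $q(1+\eta)$. I would also include the null instance in which no action is good. A short computation with the stationary distribution of the induced two-state chain shows that playing $a^\ast$ yields stationary mass $(1+\eta)/(2+\eta)$ on $s_+$, hence average reward $\rho^\ast=\tfrac12+\Theta(\eta)$, whereas any policy ignorant of $a^\ast$ achieves only $\tfrac12$; thus each step spent not playing $a^\ast$ in $s_-$ costs $\Theta(\eta)$ in regret. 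Solving the Poisson equation \eqref{eq:poisson} for this chain shows the bias span is $\Theta(1/q)$ (the difference $\lambda(s_+)-\lambda(s_-)$ scales like the $\Theta(1/q)$ mixing time times the unit per-step reward difference), so I would tune the constant in $q\asymp 1/H$ so that the span equals exactly $H$, using $H>1$ to keep $q\le \tfrac12$ and the probabilities valid.

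The heart of the argument is information-theoretic. Let $\Prob_0$ be the law of the observation sequence under the null instance and $\Prob_{a}$ the law when $a$ is planted as the good action. The only way to gather information about $a^\ast$ is to sample it in $s_-$, and a single such sample is a Bernoulli observation distinguishing $q$ from $q(1+\eta)$, carrying KL divergence $\Theta(q\eta^2)$. By the standard divergence-decomposition bound, the KL distance between $\Prob_0$ and $\Prob_a$ is at most $\E_0[N_a]\cdot\Theta(q\eta^2)$, where $N_a$ is the number of plays of $a$ in $s_-$. Placing a uniform prior over $a^\ast\in\sA$ and using $\sum_a N_a \le T$ (a pigeonhole step, so that on average an action is played $\le T/A$ times), Pinsker's inequality shows that no algorithm can concentrate its plays on the planted action unless $T/A \gtrsim 1/(q\eta^2)$. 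Choosing $\eta\asymp\sqrt{A/(qT)}\asymp\sqrt{AH/T}$ keeps the instances statistically indistinguishable while forcing $\Omega(T)$ suboptimal plays in the averaged instance, giving expected regret $\Omega(\eta T)=\Omega(\sqrt{AT/q})=\Omega(\sqrt{HAT})$ in the regime $T\gtrsim HA$ (needed for $\eta<1$).

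I expect the main obstacle to be the two continuous-setting bookkeeping points rather than the bandit-style averaging, which is routine once set up: first, rigorously computing the bias span of the embedded chain via the Poisson equation and verifying it can be tuned to \emph{exactly} $H$ while all transition probabilities remain in $[0,1]$; and second, confirming that the point-mass embedding genuinely yields a problem ``satisfying Assumption~\ref{ass:rew}'' whose regret is identical to that of the underlying finite chain, so that the finite-state machinery transfers verbatim. A secondary care point is ensuring the per-sample KL estimate $\Theta(q\eta^2)$ and the reward-gap estimate $\Theta(\eta)$ hold uniformly in the chosen regime of $q$ and $\eta$, so that the final optimization over $\eta$ is valid.
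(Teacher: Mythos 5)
Your construction is sound and reaches the claimed $\Omega(\sqrt{HAT})$ bound, but it takes a genuinely different route from the paper. The paper's proof is a reduction to a multi-armed bandit with $nA$ arms: it partitions $[0,1]$ into $n$ intervals, makes each interval absorbing under every regular action with constant per-interval rewards chosen as in the bandit lower bound of Auer et al., and adds one zero-reward ``reset'' action that redistributes uniformly over $[0,1]$; the bias span is then $n$ (it takes $\approx n$ resets to reach a target interval), and setting $n=H$ turns the bandit bound $\Omega(\sqrt{nAT})$ into $\Omega(\sqrt{HAT})$. You instead embed the two-state gadget from the Jaksch--Ortner--Auer finite-MDP lower bound as two point masses in $[0,1]$, place the hardness in a small perturbation $\eta$ of a transition probability $q\asymp 1/H$, and run the KL/Pinsker averaging argument directly. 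Both exploit that only Assumption~\ref{ass:rew} must hold, so transitions may be discontinuous (the paper explicitly flags this in Remark~\ref{rem:lower}). The trade-off: the paper's argument is a three-line reduction that outsources all information-theoretic work to the cited bandit bound, whereas yours is self-contained but must redo the divergence decomposition and, as you note, the slightly delicate regret accounting that converts ``too few plays of $a^\ast$ in $s_-$'' into a reward shortfall (each such play buys roughly $\eta$ extra expected visits to $s_+$, via the extra jump probability $q\eta$ times the sojourn length $1/q$); that step, the exact tuning of the span to $H$, and the restriction to the regime $T\gtrsim HA$ are all standard and do not constitute gaps.
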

\begin{proof}
Consider the following reinforcement learning problem with state space $[0,1]$.
The state space is partitioned into $n$ intervals $I_j$ of equal size. The transition 
probabilities for each action~$a$ are on each of the intervals~$I_j$ concentrated and
equally distributed on the same interval~$I_j$. The rewards on each 
interval~$I_j$ are also constant for each $a$ and are chosen
as in the lower bounds for a multi-armed bandit problem \cite{acfs} with $n A$ arms.
That is, giving only one arm slightly higher reward, it is known~\cite{acfs}
that regret of $\Omega(\sqrt{nAT})$ can be forced upon any algorithm on the respective bandit problem.
Adding another action giving no reward and equally distributing over the whole
state space, the bias span of the problem is $n$ and the regret $\Omega(\sqrt{H AT})$.
\end{proof}

\begin{remark}\label{rem:lower}
Note that Assumption~\ref{ass:prob} does not hold in the example used in the proof of Theorem~\ref{thm:lower}.
However, the transition probabilities are piecewise constant (and hence Lipschitz) and known to the learner.
Actually, it is straightforward to deal with piecewise H\"older continuous rewards and transition probabilities 
where the finitely many points of discontinuity are known to the learner. If one makes sure that 
the intervals of the discretized state space do not contain any discontinuities, it is easy to adapt \ucrl\ 
and Theorem~\ref{thm} accordingly.
\end{remark}

\begin{remark}[\textbf{comparison to bandits}]
 The bounds of Theorems~\ref{thm} and \ref{thm:lower} cannot be directly compared to bounds 
 for the continuous-armed bandit problem~\cite{kleinberg,auorsze,upfal,remi-x}, because
 the latter is no special case of learning MDPs with continuous state space
 (and rather corresponds to a continuous action space). Thus, in particular one cannot
 freely sample an arbitrary state of the state space as assumed in continuous-armed bandits.
\end{remark}

%%%%%
\section{Proof of Theorem~\ref{thm}}\label{sec:proof}
%%%% 

For the proof of the main theorem we adapt the proof of the regret bounds for finite MDPs in~\cite{jaorau} 
and~\cite{regal}. Although the state space is now continuous, due to the finite horizon $T$, we can reuse some arguments,
so that we keep the structure of the original proof of Theorem~2 in~\cite{jaorau}.
Some of the necessary adaptations made are similar to techniques used for showing regret bounds 
for other modifications of the original UCRL2 algorithm \cite{or-adagg,or-restless},
which however only considered finite-state MDPs.

%%%
\subsection{Splitting into Episodes}
Let $v_k(s,a)$ be the number of times action $a$ has been chosen in episode~$k$ when being in state $s$,
and denote the total number of episodes by $m$.
Then setting $\Delta_k:= \sum_{s,a} v_k(s,a)  ( \rho^* -  \meanr(s,a))$,
with probability at least $1-\tfrac{\delta}{12T^{5/4}}$ the regret of \ucrl\ after $T$ steps is upper bounded by 
(cf.~Section 4.1 of \cite{jaorau}),
  \begin{equation}\label{eq:r-bound}
    \st  {\sqrt{\tfrac{5}{8} T \log \left(\tfrac{8T}{\delta}\right)}} + \sum_{k=1}^m \Delta_k \;.
  \end{equation}

%%%
\subsection{Failing Confidence Intervals}
Next, we consider the regret incurred when the true MDP~$M$ is not contained in 
the set of plausible MDPs~${\mathcal{M}_{k}}$. Thus, fix a state-action pair 
$(s,a)$, and recall that $\hat{r}(s,a)$ and $\hat{p}\agg(\cdot|s,a)$ are the 
estimates for rewards and transition probabilities calculated from all samples 
of state-action pairs contained in the same interval~$I(s)$. 
Now assume that at step $t$ there have been $N>0$ samples of action $a$ in states in~$I(s)$ 
and that in the $i$-th sample a transition from state $s_i\in I(s)$ to state $s'_i$
has been observed $(i=1,\ldots,N)$.
  
First, concerning the rewards one obtains as in the proof of Lemma 17 in Appendix C.1 of \cite{jaorau} --- 
but now using Hoeffding for independent and not necessarily identically distributed random variables ---  that 
  \begin{eqnarray}\label{eq:rewc}
  \st  \Pr\left\{
      \left\vert \vphantom{X^X_X}
        \hat{r}(s,a) - \mathbb{E}[\hat{r}(s,a)]
      \right\vert
      \geq \sqrt{\frac{7}{2N}\log\big( \tfrac{2 nA t}{\delta}\big) }
    \right\}
  & \leq & \frac{\delta}{60 n A t^7}.
  \end{eqnarray}

Concerning the transition probabilities, we have for a suitable $x\in\{-1,1\}^{n}$
\begin{eqnarray}
	\lefteqn{\Big\| \hat{p}\agg(\cdot|s,a)- \mathbb{E}[\hat{p}\agg(\cdot|s,a)] \Big\|_1  
	=
	\sum_{j=1}^n \Big|  \hat{p}\agg(I_j|s,a) - \mathbb{E}[\hat{p}\agg(I_j|s,a)]  \Big|}  \nonumber
	\\
	&=&  
	 \sum_{j=1}^n \Big(  \hat{p}\agg(I_j|s,a) - \mathbb{E}[\hat{p}\agg(I_j|s,a)]  \Big)\, x(I_j)  \nonumber\\
	&=&
	 \tfrac{1}{N}\sum_{i=1}^N 
			\Big( x(I(s'_i))  - \int_{\sS} p(ds'|s_i,a)\cdot x(I(s'))  \Big)\,. \quad \label{eq:ah}
\end{eqnarray}
For any $x\in\{-1,1\}^{n}$, $X_i := x(I(s'_i))  - \int_{\sS} p(ds'|s_i,a)\cdot x(I(s'))$ 
is a  martingale difference sequence 
 with $|X_i|\leq 2$, so that by Azuma-Hoeffding inequality (e.g., Lemma 10 in \cite{jaorau}),
$\Pr\{\vphantom{X^X_X} \sum_{i=1}^N X_i \geq \theta\} \leq  \exp( -  \theta^2/8N)$ and in particular 
\[
   {\st \Pr\Big\{\vphantom{X^X_X} \sum_{i=1}^N X_i \geq \sqrt{56 n N \log\big( \tfrac{2 A t}{\delta}\big)} \Big\} 
    \leq  \Big(\tfrac{\delta}{2 A t}\Big)^{7n}}
    \leq  \frac{\delta}{2^n 20 n A t^7}.
\]
A union bound over all sequences $x\in\{-1,1\}^n$ then yields from \eqref{eq:ah} that
 \begin{eqnarray}\label{eq:probc}
   \st \Pr\left\{
      \Big\| \vphantom{X^X_X}
         \hat{p}\agg(\cdot|s,a) - \mathbb{E}[\hat{p}\agg(\cdot|s,a)]
      \Big\|_1
      \geq \sqrt{\frac{56 n}{N}\log\big( \tfrac{2 A t}{\delta}\big) }
    \right\}
  &  \leq & \frac{\delta}{20 nA t^7}.
  \end{eqnarray}

Another union bound over all $t$ possible values for $N$, all $n$ intervals and all actions
shows that the confidence intervals in \eqref{eq:rewc} and \eqref{eq:probc} hold at time $t$ with probability at least $1-\frac{\delta}{15 t^{6}}$
for the actual counts $N(I(s),a)$ and all state-action pairs $(s,a)$. (Note that the equations 
\eqref{eq:rewc} and \eqref{eq:probc} are the same for state-action pairs with states in the same interval.)

Now, by linearity of expectation $\mathbb{E}[\hat{r}(s,a)]$ can be written as $\frac{1}{N} \sum_{i=1}^N r(s_i,a)$. 
Since the $s_i$ are assumed to be in the same interval $I(s)$, it follows that $|\mathbb{E}[\hat{r}(s,a)] - r(s,a)| < L n^{-\alpha}$.
Similarly, $\big\|\mathbb{E}[\hat{p}\agg(\cdot|s,a)]-p\agg(\cdot|s,a)\big\|_1 < L n^{-\alpha}$. Together with \eqref{eq:rewc} and \eqref{eq:probc}
this shows that with probability at least $1-\frac{\delta}{15 t^{6}}$ for all state-action pairs $(s,a)$ 
  \begin{eqnarray}
      \left\vert \vphantom{X^X_X}
        \hat{r}(s,a) - r(s,a)
      \right\vert
      &<& L n^{-\alpha} + \st\sqrt{\frac{7 \log(2nAt/\delta)}{2\max\{1,N(I(s),a)\}} }  \label{eq:rewc2}\,,  \\
       \Big\| \vphantom{X^X_X}
         \hat{p}\agg(\cdot|s,a) - {p}\agg(\cdot|s,a)
      \Big\|_1
      &<& L n^{-\alpha} + \st \sqrt{\frac{56n \log(2At/\delta)}{\max\{1,N(I(s),a)\}}}\,. \label{eq:probc2}
  \end{eqnarray} 
This shows that the true MDP is contained in the set of plausible MDPs $\mathcal{M}(t)$ at step $t$
 with probability at least $1-\frac{\delta}{15 t^{6}}$,
just as in Lemma 17 of~\cite{jaorau}. The argument that 
  \begin{equation} \label{eq:confidence}
    \sum_{k=1}^m \Delta_k \Ind{M \not\in {\mathcal{M}_{k}}}  \; \leq \; \sqrt{T}
  \end{equation}
   with probability at least $1-\frac{\delta}{12 T^{5/4}}$ then can be taken without any changes from 
   Section 4.2 of \cite{jaorau}.

%%%
\subsection{Regret in Episodes with $M\in{\mathcal{M}_{k}}$}\label{sec:confhold}

Now for episodes with $M\in{\mathcal{M}_{k}}$, by the optimistic choice of $\tM_k$ and $\tpi{k}$ in~\eqref{eq:compute} we can bound 
\begin{eqnarray*}
 \Delta_k   &=&   \sum_{s} v_k(s,\tpi{k}(s))  \big( \rho^* -  \meanr(s,\tpi{k}(s))\big) \\
        &\leq&  
        \sum_{s} v_k(s,\tpi{k}(s))  \big( \rhot_k^* -  \meanr(s,\tpi{k}(s))\big)  \\ 		
 	 &=&   \sum_{s} v_k(s,\tpi{k}(s))  \big( \rhot_k^* -  \rt_k(s,\tpi{k}(s))\big)  
 	      +  \sum_{s} v_k(s,\tpi{k}(s))  \big(  \rt_k(s,\tpi{k}(s))  -  r(s,\tpi{k}(s))  \big).
\end{eqnarray*}

Any term $\tilde{r}_k(s,a) -\meanr(s,a) \leq |\tilde{r}_k(s,a) -\hat{r}_k(s,a)| + |\hat{r}_k(s,a)-\meanr(s,a)|$ 
is bounded according to \eqref{eq:civR} and \eqref{eq:rewc2}, as we assume that $\tMk,M\in{\mathcal{M}_{k}}$, so 
that summarizing states in the same interval $I_j$
\begin{equation*}
\Delta_k   \leq  \sum_{s} v_k(s,\tpi{k}(s))  \big( \rhot_k^* -  \rt_k(s,\tpi{k}(s))\big) 
        + 2 \sum_{j=1}^n \sum_{a\in \sA} v_k(I_j,a) \left(Ln^{-\alpha} + \sqrt{ \tfrac{7\log\left( 2 n A t_k / \delta \right )}
              {2\max\{1,N_k(I_j,a)\}} }  \right).
\end{equation*}
Since $\max\{1,N_k(I_j,a)\} \leq t_k \leq T$, setting $\tau_k:=t_{k+1}-t_k$ to be the length of episode $k$ we have \vspace{0.5mm}
\begin{eqnarray}
\Delta_k &\leq&   \sum_{s} v_k(s,\tpi{k}(s))  \big( \rhot_k^* -  \rt_k(s,\tpi{k}(s))\big)  \nonumber\\[-1.5mm]
        && + \; 2L n^{-\alpha} \tau_k + \sqrt{ 14\log\left( \tfrac{2 n A T}{\delta}  \right )}  
        \sum_{j=1}^n \sum_{a\in \sA} \frac{ v_k(I_j,a)}{\sqrt{\max\{1,N_k(I_j,a)\}} }\,.\quad  \label{eq:poisson3}
\end{eqnarray}

We continue analyzing the first term on the right hand side of \eqref{eq:poisson3}.
By the Poisson equation~\eqref{eq:poisson} for $\tpi{k}$ on $\tM_k$, denoting the respective bias by $\tlambda_k:=\tlambda_{\tpi{k}}$ 
we can write\vspace{0.5mm}
\begin{eqnarray}
  \lefteqn{\sum_{s}}&&\!\!\!\!\!\! v_k(s,\tpi{k}(s)) \big( \rhot_k^* -  \rt_k(s,\tpi{k}(s))\big) \nonumber \\[-1mm]	
 	    &=&  \sum_{s} v_k(s,\tpi{k}(s))  \Big( \int_\sS \pt_k(ds'|s,\tpi{k}(s)) \cdot \tlambda_k(s') - \tlambda_k(s) \Big) \nonumber \\ 	
 \nonumber \\
 	      &=&  \sum_{s} v_k(s,\tpi{k}(s))  \Big( \int_{\sS} p(ds'|s,\tpi{k}(s)) \cdot \tlambda_k(s') - \tlambda_k(s) \Big) \label{eq:neu1} \\
 	      && + \sum_{s} v_k(s,\tpi{k}(s)) \sum_{j=1}^n\int_{I_j} \Big( \pt_k(ds'|s,\tpi{k}(s)) -p(ds'|s,\tpi{k}(s)) \Big) \cdot \tlambda_k(s'). \label{eq:neu2}
\end{eqnarray}

%%%
\subsection{The True Transition Functions}\label{sec:MainTerm}
Now $\big\| \pt\agg_k(\cdot|s,a) -p\agg(\cdot|s,a)\big\|_1 
           \leq 
             \big\| \pt\agg_k(\cdot|s,a) -\ph_k\agg(\cdot|s,a)\big\|_1  +  \big\| \ph\agg_k(\cdot|s,a) -p\agg(\cdot|s,a)\big\|_1$
can be bounded by \eqref{eq:civ} and \eqref{eq:probc2}, because we assume $\tMk, M \in {\mathcal{M}_{k}}$. 
Hence, since by definition of the algorithm $H$ bounds the bias function $\tlambda_k$, the term in~\eqref{eq:neu2} is bounded by
\begin{eqnarray}
 \lefteqn{\sum_{s}}&&\!\! v_k(s,\tpi{k}(s)) \sum_{j=1}^n\int_{I_j} \tlambda_k(s')\Big( \pt_k(ds'|s,\tpi{k}(s)) -p(ds'|s,\tpi{k}(s)) \Big) 
 \nonumber \\
  &&\leq\;
  \sum_{s} v_k(s,\tpi{k}(s))\cdot H\cdot \sum_{j=1}^n \Big( \pt\agg_k(I_j|s,\tpi{k}(s)) -p\agg(I_j|s,\tpi{k}(s)) \Big)  \nonumber \\
  &&\leq\;
  \sum_{s} v_k(s,\tpi{k}(s)) \cdot H \cdot  
  2 \left( L n^{-\alpha} 
  + \st \sqrt{
        \frac{56 n \log\left(2 A T / \delta \right )}{\max\{1,N_k(I(s),a_t)\}}
      } \right)\nonumber \\
      &&=\; 
      2 HL n^{-\alpha} \tau_k + 4H \sqrt{14n\log\left( \tfrac{2 A T}{\delta}  \right )}
       \sum_{j=1}^n \sum_{a\in \sA} \frac{ v_k(I_j,a)}{\sqrt{\max\{1,N_k(I_j,a)\}} }\,
    , \qquad\label{eq:zwi}
\end{eqnarray}
while for the term in~\eqref{eq:neu1}
\begin{eqnarray*}
\lefteqn{\sum_{s}}&&\!\! v_k(s,\tpi{k}(s))  \Big( \int_{\sS} p(ds'|s,\tpi{k}(s)) \cdot \tlambda_k(s') - \tlambda_k(s) \Big) \\
 &&=\sum_{t=t_k}^{t_{k+1}-1}  \Big( \int_{\sS} p(ds'|s_t,a_t)\cdot \tlambda_k(s') -  \tlambda_k(s_t) \Big) \\
 &&=\sum_{t=t_k}^{t_{k+1}-1}  \Big( \int_{\sS} p(ds'|s_t,a_t)\cdot \tlambda_k(s') -  \tlambda_k(s_{t+1}) \Big) 
                     +  \tlambda_k(s_{t_{k+1}})  -  \tlambda_k(s_{t_k}). 
\end{eqnarray*}

Let $k(t)$ be the index of the episode time step $t$ belongs to.
Then the sequence $X_t:=\int_{\sS} p(ds'|s_t,a_t)\cdot \tlambda_{k(t)}(s') -  \tlambda_{k(t)}(s_{t+1})$
is a sequence of martingale differences so that Azuma-Hoeffding inequality shows (cf. Section 4.3.2 
and in particular eq.~(18) in~\cite{jaorau}) that after summing over all episodes we have 
\begin{eqnarray}
   \sum_{k=1}^m 
   \bigg( 
     \sum_{t=t_k}^{t_{k+1}-1}  \Big( \int_{\sS} p(ds'|s_t,a_t)\cdot \tlambda_k(s') -  \tlambda_k(s_{t+1}) \Big) 
                     +  \tlambda_k(s_{t_{k+1}})  -  \tlambda_k(s_{t_k})
   \bigg)\nonumber\\
   \leq  
   H \sqrt{\tfrac{5}{2}T\log\left(\tfrac{8T}{\delta}\right)} 
           +  H n A \log_2\big(\tfrac{8T}{n A}\big), \label{eq:counts}
\end{eqnarray}
where the second term comes from an upper bound on the number of episodes, which can be derived 
analogously to Appendix~C.2 of \cite{jaorau}.

\subsection{Summing over Episodes with $M\in{\mathcal{M}_{k}}$}
\label{sec:together}
To conclude, we sum \eqref{eq:poisson3} over all the
  episodes with ${\M\in{\mathcal{M}_{k}}}$, using \eqref{eq:neu1}, \eqref{eq:zwi}, and \eqref{eq:counts}.
  This yields that with probability at least $1 -\tfrac{\delta}{12 T^{5/4}}$
  \begin{eqnarray} 
   \lefteqn{}&&   \sum_{k=1}^m \Delta_k\Ind{\M\in\sM_{k}}
       \leq  \label{eq:CiHold}
         2HL n^{-\alpha} T   +    4 H \sqrt{14n\log\left( \tfrac{2 A T}{\delta}  \right )}\cdot 
                  \sum_{k=1}^m \sum_{j=1}^n\sum_{a\in\sA} \frac{ v_k(I_j,a)}{\sqrt{\max\{1,N_k(I_j,a)\}} }
      \nonumber  \\ & & \mbox{}
        + H \sqrt{\tfrac{5}{2}T \log\left(\tfrac{8T}{\delta}\right)}
        +  H n A \log_2\left(\tfrac{8T}{n A}\right)  \nonumber\\ 
 & & \mbox{}
        + 2L n^{-\alpha} T + \sqrt{ 14\log\left( \tfrac{2 n A T}{\delta}  \right )}
        \sum_{k=1}^m \sum_{j=1}^n\sum_{a\in \sA} \frac{ v_k(I_j,a)}{\sqrt{\max\{1,N_k(I_j,a)\}} }
        .\quad
  \end{eqnarray}
Analogously to Section 4.3.3 and Appendix C.3 of \cite{jaorau}, one can show that
  \begin{equation*}
    \sum_{j=1}^n\sum_{a\in \sA}  \sum_k  \frac{v_k(I_j,a)}{\sqrt{\max\{1,N_k(I_j,a)\}} } \;\leq\;  \big(\sqrt{2}+1\big)\sqrt{n A T},
  \end{equation*}
and we get from \eqref{eq:CiHold} after some simplifications 
that with probability $\geq 1 -\tfrac{\delta}{12 T^{5/4}}$
  \begin{eqnarray} 
     \lefteqn{}&& \sum_{k=1}^m \Delta_k\Ind{\M\in\sM_{k}}
       \leq 
         H \sqrt{\tfrac{5}{2}T \log\left(\tfrac{8T}{\delta}\right)}
        + H n A \log_2\left(\tfrac{8T}{n A}\right) 
         \nonumber \\ & & \mbox{}
        + \Big( (4H + 1) \sqrt{14n\log\left( \tfrac{2 A T}{\delta} \right )} \Big) \big(\sqrt{2}+1\big)\sqrt{n A T}
         + 2(H+1)Ln^{-\alpha} T
        \;.  \label{eq:CiHold2}
  \end{eqnarray}
Finally, evaluating \eqref{eq:r-bound} by
  summing $\Delta_k$ over all episodes, by \eqref{eq:confidence} and \eqref{eq:CiHold2} we have with probability $\geq 1 -\tfrac{\delta}{4 T^{5/4}}$ an upper bound on the regret of 
  \begin{eqnarray*} 
     \lefteqn{}    && \sqrt{\tfrac{5}{8} T \log\left(\tfrac{8T}{\delta}\right)} + \sum_{k=1}^m  \Delta_k\Ind{\M\notin\sM_{k}}
        + \sum_{k=1}^m \Delta_k\Ind{\M\in\sM_{k}}
      \label{eq:combined1} \\
      && \leq  
           \sqrt{\tfrac{5}{8}T \log\left(\tfrac{8T}{\delta}\right)} + \sqrt{T} 
        + H \sqrt{\tfrac{5}{2}T \log\left(\tfrac{8T}{\delta}\right)}
        + H n A \log_2\left(\tfrac{8T}{n A}\right) 
        \\ & & \mbox{}
        + \Big( (4H + 1) \sqrt{14n\log\left( \tfrac{2 A T}{\delta}  \right )} \Big) \big(\sqrt{2}+1\big)\sqrt{n A T}
         + 2(H+1)L n^{-\alpha} T . \quad \nonumber
  \end{eqnarray*}
A union bound over all possible values of $T$ and further simplifications as in Appendix C.4 of \cite{jaorau} finish the proof. \qed

\section{Outlook}

We think that a generalization of our results to continuous action space should not pose any major problems.
In order to improve over the given bounds, it may be promising to investigate more sophisticated discretization patterns.

The assumption of H\"older continuity is an obvious, yet not the only possible assumption
one can make about the transition probabilities and reward functions. A more general problem is to assume
a set $\mathcal F$ of functions, find a way to measure the ``size'' of $\mathcal F$, and derive 
regret bounds depending on this size of $\mathcal F$.

\subsubsection*{Acknowledgments}
{ The authors would like to thank the three anonymous reviewers for their helpful suggestions 
and R{\'e}mi Munos for useful discussion which helped to improve the bounds.
This research 
was funded by the Ministry of Higher Education and Research, Nord-Pas-de-Calais Regional Council and 
 FEDER (Contrat de Projets Etat Region CPER  2007-2013),
  ANR  projects EXPLO-RA (ANR-08-COSI-004),
  Lampada (ANR-09-EMER-007)  and CoAdapt,
  and by the European Community's  FP7 Program  under grant agreements 
 n$^\circ$\,216886 (PASCAL2) and n$^\circ$\,270327 (CompLACS).
 The first author is currently funded by the Austrian Science Fund (FWF): J~3259-N13.}

{\small

}

\end{document}